\title{\textsc{Minimax Regret for Bandit Convex Optimisation of Ridge Functions}}
\author{Tor Lattimore}
\definecolor{dkblue}{cmyk}{1,.54,.04,.19} 
\theoremstyle{plain}
\newtheorem{theorem}{Theorem}
\newtheorem{lemma}[theorem]{Lemma}
\newtheorem{proposition}[theorem]{Proposition}
\theoremstyle{definition}
\theoremstyle{remark}
\newcommand{\R}{\mathbb R}
\newcommand{\argmin}{\operatornamewithlimits{arg\,min}}
\newcommand{\ip}[1]{\langle #1 \rangle}
\newcommand{\Reg}{\mathfrak{R}}
\newcommand{\norm}[1]{\Vert #1 \Vert}
\newcommand{\E}{\mathbb E}
\newcommand{\Var}{\mathbb V}
\newcommand{\cE}{\mathcal E}
\newcommand{\cK}{\mathcal K}
\newcommand{\cH}{\mathcal H}
\newcommand{\cF}{\mathcal F}
\newcommand{\cL}{\mathcal L}
\newcommand{\BS}{\mathbb S}
\newcommand{\zeros}{ \bm 0}
\newcommand{\bbP}{\mathbb P}
\newcommand{\const}{\operatorname{const}}
\newcommand{\conv}{\operatorname{conv}}
\newcommand{\diam}{\operatorname{diam}}
\renewcommand{\d}[1]{\operatorname{d}\!#1}
\newcommand{\abovelabel}[1]{\stackrel{\raisebox{0.5mm}{\textrm{\tiny \color{red} #1}}}}
\theoremstyle{definition}
\newtheorem{fact}[theorem]{Fact}
\begin{document}

\maketitle

\begin{abstract}
We analyse adversarial bandit convex optimisation with an adversary that is restricted to playing functions
of the form $f_t(x) = g_t(\langle x, \theta\rangle)$ for convex $g_t : \mathbb R \to \mathbb R$ and unknown $\theta \in \mathbb R^d$ that is homogeneous over time.
We provide a short information-theoretic proof that the minimax regret is at most $O(d \sqrt{n} \log(n \operatorname{diam}(\mathcal K)))$ 
where $n$ is the number of interactions, $d$ the dimension and $\operatorname{diam}(\mathcal K)$ is the diameter of the constraint set.
\end{abstract}

\section{Introduction}
Let $\cK \subset \R^d$ be a convex body (non-empty interior, compact, convex).
A game proceeds over $n$ rounds. At the start of the game, an adversary secretly chooses a vector $\theta \in \BS^{d-1} = \{x \in \R^d : \norm{x} = 1\}$ 
and sequence $(f_t)_{t=1}^n$ such that for all $t$, $f_t : \cK \to \R$ is a function that is:
\begin{enumerate}
\item[(a)] convex; and
\item[(b)] bounded: $f_t(x) \in [0,1]$ for all $x \in \cK$; and
\item[(c)] Lipschitz: $f_t(x) - f_t(y) \leq \norm{x - y}$ for all $x, y \in \cK$; and
\item[(d)] a ridge function: $f_t(x) = g_t(\ip{x, \theta})$ for some $g_t : \R \to \R$.
\end{enumerate}
The learner then sequentially chooses $(x_t)_{t=1}^n$ with $x_t \in \cK$ and observes $f_t(x_t)$, which means that $x_t$
should only depend on the previous actions $(x_s)_{s=1}^{t-1}$, observed losses $(f_s(x_s))_{s=1}^{t-1}$ and possibly a source of randomness.
The minimax regret is
\begin{align*}
\Reg_n = \inf_{\textrm{policy}} \sup_{\textrm{adversary}} \max_{x \in \cK} \E\left[\sum_{t=1}^n f_t(x_t) - f_t(x)\right]\,,
\end{align*}
where the infimum is over all policies, the supremum is over the choices of the adversary subject to the constraints (a)--(d) above, and the expectation
integrates over the randomness in the policy.
Note that the direction $\theta$ of the ridge is \textit{not} known to the learner, but does not change with time. 

\begin{fact}\label{fact}
Suppose that $f$ satisfies (a)--(d) above and $f(x) = g(\ip{x, \theta})$ with $\theta \in \BS^{d-1}$.
Then $g$ is convex and Lipschitz on the closed interval $\{\ip{x, \theta} : x \in \cK\}$.
\end{fact}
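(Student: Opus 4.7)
The plan is to first identify $g$ unambiguously as a function on $I := \{\ip{x, \theta} : x \in \cK\}$ and then verify each claim in turn. Since $x \mapsto \ip{x, \theta}$ is linear and continuous and $\cK$ is compact and convex, $I$ is a compact convex subset of $\R$, hence a closed interval $[\alpha, \beta]$. The hypothesis $f(x) = g(\ip{x, \theta})$ means any two points of $\cK$ with the same inner product with $\theta$ are assigned the same value by $f$, so the formula $g(t) := f(x)$ for any $x \in \cK$ with $\ip{x, \theta} = t$ defines $g$ unambiguously on $I$.

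For convexity, I would take $t_1, t_2 \in I$ and $\lambda \in [0, 1]$, pick $x_i \in \cK$ with $\ip{x_i, \theta} = t_i$, and note that $z = \lambda x_1 + (1-\lambda) x_2$ lies in $\cK$ with $\ip{z, \theta} = \lambda t_1 + (1-\lambda) t_2$. Convexity of $f$ then yields $g(\lambda t_1 + (1-\lambda) t_2) = f(z) \leq \lambda f(x_1) + (1-\lambda) f(x_2) = \lambda g(t_1) + (1-\lambda) g(t_2)$.

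For Lipschitzness (assuming $\alpha < \beta$; otherwise the claim is trivial), the key step is to lift $[\alpha, \beta]$ to a Lipschitz curve inside $\cK$. I would fix any $x_\alpha, x_\beta \in \cK$ with $\ip{x_\alpha, \theta} = \alpha$ and $\ip{x_\beta, \theta} = \beta$ and, for $t \in [\alpha, \beta]$, define the affine interpolant $x_t = x_\alpha + \frac{t - \alpha}{\beta - \alpha}(x_\beta - x_\alpha)$, which lies in $\cK$ by convexity and satisfies $\ip{x_t, \theta} = t$. The Lipschitzness of $f$ then gives $|g(t_1) - g(t_2)| \leq \norm{x_{t_1} - x_{t_2}} = \frac{\norm{x_\beta - x_\alpha}}{\beta - \alpha}\,|t_1 - t_2|$, which is the desired Lipschitz bound on $I$.

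There is no real obstacle here. The one conceptual point worth flagging is that the Lipschitz constant for $g$ produced in this way need not equal $1$: the chord from $x_\alpha$ to $x_\beta$ can be much longer than $\beta - \alpha$ when $\cK$ is elongated transversely to $\theta$, so property (c) is preserved only qualitatively when passed to $g$. An alternative route, going via continuity of $g$ (from Hausdorff continuity of the parallel sections $\pi^{-1}(t)$) together with convexity, would work but is less direct than the affine interpolation above.
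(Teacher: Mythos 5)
Your convexity argument matches the paper's (which only notes it is ``obvious''). The issue is the Lipschitz part: what the paper actually needs, and what its proof delivers, is that $g$ is $1$-Lipschitz, inheriting the same constant that $f$ has from property (c). This is clear from the paper's own counterexample remark (calling $g(x)=\sqrt 2\,x$ ``not Lipschitz'' only makes sense in the $1$-Lipschitz reading) and, more importantly, from the downstream application in Case~2 of the proof of Proposition~\ref{prop:main}, where the step $\tilde f_\star - f(\zeros) \geq \tilde f_\star - f_\star - \ip{x_\star,\theta}$ uses exactly $g(0) - g(\ip{x_\star,\theta}) \leq \ip{x_\star,\theta}$, i.e.\ Lipschitz constant~$1$.

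Your affine-chord construction yields the constant $\norm{x_\beta - x_\alpha}/(\beta-\alpha)$, which can be as large as $\diam(\cK)$ and is $\sqrt 2$ already in the paper's own two-dimensional example. You flag this yourself in the last paragraph but dismiss it as inconsequential; it is not — it would break the inequality in Case~2. The reason a single chord from $x_\alpha$ to $x_\beta$ is too crude is that the chord is generally oblique to $\theta$, so its Euclidean length exceeds its progress $\beta-\alpha$ in the $\theta$ direction by a secant factor. The paper's argument avoids this by exploiting that $\cK$ is a \emph{body} (nonempty interior): for interior $x,y$ it builds a path that moves only along $\theta$ or perpendicular to $\theta$. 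Along the perpendicular pieces $f$ is constant (ridge structure), so all of the change in $f$, and hence in $g$, is incurred on the $\theta$-parallel pieces, whose total length is exactly $\ip{y-x,\theta}$; the $1$-Lipschitzness of $f$ then gives $|g(\ip{y,\theta})-g(\ip{x,\theta})| \leq \ip{y-x,\theta}$, and the boundary case follows by continuity. A quick way to make this rigorous is to show $g$ is \emph{locally} $1$-Lipschitz on the open interval: for any $c$ in the open image pick an interior point $p$ with $\ip{p,\theta}=c$; then $p+s\theta\in\cK$ for small $|s|$, so $|g(c+s)-g(c)|=|f(p+s\theta)-f(p)|\leq|s|$, and local $1$-Lipschitz plus continuity (or convexity) upgrades to global $1$-Lipschitz on the closed interval. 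Your chord construction would need to be replaced by something of this form.
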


A proof is given in \cref{sec:fact}.
A proof of the following theorem is our main contribution.

\begin{theorem}\label{thm:main}
Assume that $\BS^{d-1} \subset \cK$. Then
\begin{align*}
\Reg_n \leq \const d \sqrt{n} \log(n \diam(\cK))\,,
\end{align*}
where $\const$ is a universal constant and $\diam(\cK) = \max_{x, y \in \cK} \norm{x - y}$. 
\end{theorem}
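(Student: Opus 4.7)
The plan is to prove the bound by a Bayesian, information-theoretic argument in the style of Russo--Van Roy. The intuition is that by Fact~\ref{fact}, conditionally on $\theta$ the problem reduces to one-dimensional bandit convex optimisation on $\{\langle x,\theta\rangle : x \in \cK\}$, for which the minimax regret is $\tilde O(\sqrt{n})$. The additional factor $d\log(n\diam\cK)$ in the theorem should therefore represent the information-theoretic cost of identifying the unknown direction $\theta$.

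\textbf{Step 1: reduction to Bayesian regret and discretisation.} A standard minimax theorem reduces the problem to bounding the Bayesian regret under a worst-case prior over $(\theta, (g_t)_{t=1}^n)$. I then discretise: replace $\BS^{d-1}$ by a $\delta$-cover $\Theta$ of size at most $(3/\delta)^{d-1}$ and $\cK$ by a $\delta$-cover $\cK_\delta$ of size at most $(3\diam\cK/\delta)^d$, taking $\delta = \Theta(1/n)$. The Lipschitz assumption ensures that the discretisation costs $O(1)$ total regret. Let $x^\star$ denote the hindsight-optimal action in $\cK_\delta$; its prior entropy satisfies $H_0(x^\star) \leq \log|\cK_\delta| = O(d\log(n\diam\cK))$.

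\textbf{Step 2: Thompson sampling and the information-ratio inequality.} The algorithm plays $x_t$ sampled from the posterior of $x^\star$ given the history. The standard Cauchy--Schwarz plus mutual-information chain-rule argument yields
\begin{align*}
\Reg_n \leq \sqrt{n \cdot \Gamma \cdot H_0(x^\star)}\,,
\end{align*}
where $\Gamma$ is the supremum over rounds $t$ of the per-round information ratio
\begin{align*}
\Gamma_t = \frac{\bigl(\E_{t-1}[f_t(x_t) - f_t(x^\star)]\bigr)^2}{I_{t-1}\bigl(x^\star;\,(x_t, f_t(x_t))\bigr)}\,.
\end{align*}

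\textbf{Step 3 (the main obstacle): bounding $\Gamma$.} The technical core is showing $\Gamma = O(d \log(n\diam\cK))$. For linear bandits the analogous bound $\Gamma \leq d$ follows from a clean trace--Cauchy--Schwarz identity; the unknown 1d convex $g_t$ precludes an equally clean algebraic derivation here. My plan is to decompose the instantaneous regret into a ``learning-$\theta$'' term that is linear-bandit-like and contributes $O(d)$, plus a ``learning-the-minimum-of-$g_t$-along-$\theta$'' term that uses Fact~\ref{fact} together with polylogarithmic information-ratio bounds for one-dimensional BCO. Correctly balancing exploration between these two sources of uncertainty, and in particular ensuring that the Thompson sample already explores enough in the $\theta$-direction to make the linear-bandit-style calculation go through, is the principal difficulty I anticipate; once $\Gamma$ is controlled the claimed bound follows immediately from Step~2 and the discretisation overhead from Step~1.
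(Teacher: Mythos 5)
Your Steps 1 and 2 correctly identify the general framework (minimax duality, discretisation, and an information-ratio argument), which is indeed the paper's scaffolding. But Step 3 is where the theorem actually lives, and there you have a genuine gap rather than a proof: you state that bounding $\Gamma$ is "the principal difficulty I anticipate" and propose a decomposition into a linear-bandit-like term and a one-dimensional term, without carrying it out. That decomposition does not obviously work, and it is not what the paper does. More importantly, your Step~2 commits to Thompson sampling (playing $x_t$ from the posterior of $x^\star$), whose information ratio is \emph{not} known to be bounded by $O(d\,\polylog)$ for bandit convex optimisation — this is precisely why \cite{BDKP15,BE18,Lat20-cvx} replace pure Thompson sampling with more deliberate exploration. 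The paper instead uses a multi-exploration-distribution version of the information-ratio bound (\cref{thm:info}): one must exhibit, for every posterior-mean function $\bar f \in \conv(\cF)$, a small family $\pi_0,\dots,\pi_m$ of exploration measures such that for every $f\in\cF$ at least one $\pi_k$ balances suboptimality against squared information gain; the extra $\log(n\diam\cK)$ factor in the theorem comes from $m$, the number of such measures, not from the information ratio itself.

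The concrete construction you are missing is the heart of \cref{sec:main}: perturb $\bar f$ to $\tilde f$ so it grows linearly away from its minimiser, take $\pi_k$ to be a distribution on the level set $\cL_{\epsilon_k}$ of $\tilde f$ at geometrically spaced levels $\epsilon_k$, and use the John-ellipsoid-based \cref{lem:kw} to guarantee that $\pi_k$ has variance in every direction $\theta$ comparable to the squared width of the level set. Combined with the elementary convexity inequality \cref{lem:var}, which lower-bounds $\int(\tilde f - f)^2\,\d\pi_k$ in terms of $\Var_{\pi_k}[\ip{x,\theta}]$, this gives the required bound with $\beta = O(d)$ after a four-way case analysis. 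Your sketch has none of this machinery, and your proposed route — leaning on a $\Gamma\le d$ trace identity plus one-dimensional BCO bounds glued together by "balancing exploration" — is not a substitute: the trace identity needs linearity, and the 1d analysis needs to know the ridge direction, which is exactly the unknown quantity. The level-set construction elegantly sidesteps both issues at once, and without it or an equivalent idea the proof does not close.
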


Since the dependence on $\diam(\cK)$ is only logarithmic, the Lipschitz assumption can be relaxed entirely 
by scaling and restricting the domain of $\cK$ as explained by \cite{Lat20-cvx} and \cite{BLE17}.

\paragraph{Related work}
Our setting is a subset of bandit convex optimisation with the adversary restricted to ridge
functions. There is a long line of work without this restriction and correspondingly worse regret bounds.
The study of bandit convex optimisation was initiated by \cite{Kle04} and \cite{FK05}, who introduced simple algorithms
based on gradient descent with importance-weighted gradient estimates. Although these approaches are simple, they suffer
from provably suboptimal dependence on the regret \citep{HPGySz16:BCO}.

A major open question was whether or not $\tilde O(n^{1/2})$ regret is possible, which was closed affirmatively in dimension $1$ by
\cite{BDKP15} and in higher dimensions by \cite{HL16} and \cite{BE18}. The regret of the former algorithm depends
exponentially on the dimension, while in the latter the dependence is polynomial. The best known upper bound on the minimax regret for
bandit convex optimisation is $\tilde O(d^{2.5} \sqrt{n})$ by \cite{Lat20-cvx}.

Our analysis is based on the information-theoretic arguments introduced by \cite{RV14} and
used for the analysis of convex bandits by \cite{BDKP15,BE18} and \cite{Lat20-cvx}.
All these works rely on minimax duality and consequently do not yield
efficient algorithms. This is also true of the result presented here. 

The only polynomial time algorithm with $\tilde O(\sqrt{n})$ regret for the general case is 
by \cite{BLE17}. Although a theoretical breakthrough, the dependence of this algorithm's regret on the dimension 
is $\tilde O(d^{10.5})$ and practically speaking the algorithm is not implementable except when the dimension is very small.
All genuinely practical algorithms for adversarial bandit convex optimisation are gradient methods with importance-weighted gradient estimates.
These algorithms have optimal dependence on the horizon for strongly convex functions \citep{HL14,Ito20} but otherwise not
\citep{Kle04,FK05,Sah11,HPGySz16:BCO}.

As far as we know, the setting of the present paper has not been considered before.
\cite{SNNJ21} tackle the case where $f_t(x) = g_t(h_t(x))$ with $h_t : \R^d \to \R$ a function that is known to the learner.
By choosing $f_t(x) = \ip{x, \theta} + \eta_t \in [0,1]$ for some (adversarial) noise $(\eta_t)_{t=1}^n$, 
our setting subsumes an interesting version of the stochastic linear setting, with the restriction being that the noise is homogeneous and bounded. 
The standard lower bound for this setting is $\Omega(d \sqrt{n})$ \citep{DHK08}, but
with the assumptions required here and by taking $(\eta_t)_{t=1}^n$ to be truncated Gaussian, 
naively this construction would yield a lower bound of $\Omega(d \sqrt{n / \log(n)})$. Nevertheless, this shows that the new result is optimal up to logarithmic factors.
Note that our setting does not subsume the adversarial linear setting because the direction of the ridge is fixed.
An obvious open question is whether or not this can be relaxed.

\paragraph{Notation}
The unit sphere is
$\BS^{d-1} = \{x \in \R^d : \norm{x} = 1\}$. The support function of a set $A \subset \R^d$ is $h_A(\theta) = \sup_{x \in A} \ip{x, \theta}$.
Given a convex function $f : \cK \to \R$, its minimum is $f_\star = \min_{x \in \cK} f(x)$.
The expectation and variance of a random variable $X$ are denoted by $\E[X]$ and $\Var[X]$ respectively. The underlying probability
measure will always be obvious from the context.
All probability measures are defined over the Borel $\sigma$-algebra on the corresponding space.
Given a positive definite matrix $\Sigma$, let $\norm{x}_\Sigma = \sqrt{x^\top \Sigma x}$.
The convex hull of a set $A \subset \R^d$ is $\conv(A)$.

\section{Distributions with high projected variance}\label{sec:var}

Let $A \subset \R^d$ be compact and recall that for $\theta \in \BS^{d-1}$, $h_A(\theta) + h_A(-\theta)$ is the width of $A$ in direction $\theta$. The next lemma
asserts the existence of a distribution on $A$ for which the standard deviation in every direction is nearly
as large as the width.

\begin{lemma}\label{lem:kw}
For any compact set $A \subset \R^d$ there exists random element $X$ supported on $A$ such that
for all
$\theta \in \BS^{d-1}$,
\begin{align*}
h_A(\theta) + h_A(-\theta) \leq 2 \sqrt{d\Var[\ip{\theta, X}]}\,.
\end{align*}
\end{lemma}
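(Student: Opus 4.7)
The plan is to use a D-optimal design argument: choose the law of $X$ on $A$ so as to maximise $\log\det\Sigma(\mu)$, where $\Sigma(\mu) = \operatorname{Cov}_\mu(X)$, and then read the inequality off the first-order optimality condition. Although the label \textsc{kw} suggests Kiefer--Wolfowitz, the classical KW bound controls $\E_\mu[XX^\top]$ rather than the centred covariance, and a D-optimal design delivers exactly the centred version needed here.

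I would first dispose of the degenerate case: if $A$ is contained in a proper affine subspace $V$ of dimension $k<d$, then for $\theta$ orthogonal to $V-V$ both sides of the target inequality vanish, while for $\theta$ with a non-trivial component in $V-V$ the argument below, run inside $V$, bounds the left-hand side by $2\sqrt{k\,\Var[\ip{\theta,X}]} \le 2\sqrt{d\,\Var[\ip{\theta,X}]}$. So assume the affine hull of $A$ is all of $\R^d$. The space of Borel probability measures on $A$ is weak-$*$ compact, $\mu\mapsto \Sigma(\mu)$ is continuous, and the affine span assumption ensures $\Sigma(\mu)\succ 0$ for some $\mu$; hence $\log\det\Sigma(\cdot)$ attains a maximum at some $\mu^\star$ with $\Sigma^\star := \Sigma(\mu^\star)\succ 0$. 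Set $m^\star = \E_{\mu^\star}[X]$.

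The core step is the first-order condition. For $x\in A$ and $\epsilon\in[0,1]$, put $\mu_\epsilon=(1-\epsilon)\mu^\star+\epsilon\delta_x$ and expand, with $y:=x-m^\star$,
\begin{align*}
\Sigma(\mu_\epsilon) = \Sigma^\star + \epsilon\bigl(yy^\top - \Sigma^\star\bigr) + O(\epsilon^2).
\end{align*}
Using $\tfrac{d}{d\epsilon}\log\det(\Sigma^\star+\epsilon D)\big|_{\epsilon=0}=\operatorname{tr}(\Sigma^{\star-1}D)$, the one-sided derivative of $\log\det\Sigma(\mu_\epsilon)$ at $\epsilon=0^+$ equals $y^\top\Sigma^{\star-1}y - d$, which must be $\le 0$ by optimality of $\mu^\star$. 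Hence $(x-m^\star)^\top \Sigma^{\star-1}(x-m^\star) \le d$ for every $x\in A$, and Cauchy--Schwarz in the inner product $(u,v)\mapsto u^\top\Sigma^\star v$ yields $|\ip{\theta,x-m^\star}|\le \sqrt{d\,\theta^\top\Sigma^\star\theta} = \sqrt{d\,\Var_{\mu^\star}[\ip{\theta,X}]}$. Taking the supremum over $x\in A$ in the directions $\theta$ and $-\theta$ separately and adding makes the $\ip{\theta,m^\star}$ terms cancel, yielding the lemma with $X\sim\mu^\star$. The main technical delicacy lies in the second paragraph: proving existence of $\mu^\star$ with $\Sigma^\star\succ 0$ combines the affine-hull reduction with weak-$*$ compactness (or one may restrict to finitely supported measures by Carath\'eodory); the expansion of $\Sigma(\mu_\epsilon)$ and the Cauchy--Schwarz bound that follows are routine.
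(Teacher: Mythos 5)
Your proof is correct and takes essentially the same route as the paper: both pick the probability measure whose covariance $\Sigma$ corresponds to the minimum-volume enclosing ellipsoid of $A$ (equivalently, the D-optimal design on the centred second moment), use the fact that every point of $A$ satisfies $\norm{x-\mu}^2_{\Sigma^{-1}} \le d$, and finish with Cauchy--Schwarz in the $\Sigma$ inner product. The only difference is that the paper cites this ellipsoid property from \citet[Corollary 2.11]{Tod16}, whereas you re-derive it from scratch via the first-order optimality condition of the $\log\det$ objective; your expansion $\Sigma(\mu_\epsilon) = \Sigma^\star + \epsilon(yy^\top - \Sigma^\star) + O(\epsilon^2)$ and the conclusion $y^\top\Sigma^{\star-1}y \le d$ are correct, so this is a self-contained substitute for the citation rather than a different argument.
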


\begin{proof}
Assume without loss of generality that the affine hull of $A$ spans $\R^d$. If not, you may work in a suitable affine space.
There exists a probability measure $\pi$ on $A$ such that
with $\mu = \int_A x \d{\pi}(x)$ and $\Sigma = \int_A (x - \mu)(x - \mu)^\top \d{\pi}(x)$, the ellipsoid
$\cE = \{x :\norm{x - \mu}^2_{\Sigma^{-1}} \leq d\}$ is the minimum volume enclosing ellipsoid of $A$ \citep[Corollary 2.11]{Tod16}.
Therefore, when $X$ has law $\pi$,
\begin{align*}
h_A(\theta) + h_A(-\theta)
&= \sup_{x, y \in A} \ip{x - \mu, \theta} + \ip{\mu - y, \theta} \\
&\leq 2\sup_{x \in A} \norm{x - \mu}_{\Sigma^{-1}} \norm{\theta}_\Sigma \\
&\leq 2\sqrt{d \norm{\theta}_\Sigma^2} \\
&= 2\sqrt{d \Var[\ip{X, \theta}]}
\end{align*}
\end{proof}

\section{Information ratio}\label{sec:info}
Let us first recall the main tool, which upper bounds the minimax regret in terms of the information ratio.
The following theorem is a combination of Theorem 2 and Lemma 4 by \cite{Lat20-cvx}, which are proven
by extending the machinery developed by \cite{RV14,BDKP15} and \cite{BE18}.
We outline the key differences in \cref{sec:info-app}.

\begin{theorem}\label{thm:info}
Given $\theta \in \BS^{d-1}$, let
$\cF_\theta$ be the space of all functions satisfying (a)--(d) in the introduction and $\cF = \cup_{\theta \in \BS^{d-1}} \cF_\theta$.
Suppose that $\alpha, \beta \geq 0$ are reals such that
for all $\bar f \in \conv(\cF)$ there exist probability measures $\pi_1,\ldots, \pi_m$ on $\cK$ for which
\begin{align*}
\sup_{f \in \cF} \min_{1 \leq k \leq m} \left(\int_\cK \bar f \d{\pi_k} - f_\star - \sqrt{\beta \int_\cK (\bar f - f)^2 \d{\pi_k}}\right) \leq \alpha\,.
\end{align*}
Then $\Reg_n \leq \const(1 + n\alpha + \sqrt{dnm \beta \log(n \diam(\cK))})$ with $\const$ a universal constant.
\end{theorem}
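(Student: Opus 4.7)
The argument will follow the information-theoretic template of \cite{RV14,BDKP15,BE18,Lat20-cvx}, combining a minimax-duality reduction with a per-round information-ratio analysis, then converting posterior variance into mutual information and summing via an entropy bound. Because the adversary is non-adaptive and the mixed-policy space is convex, I would first invoke Sion's minimax theorem (as in \cite{Lat20-cvx}) to replace the worst-case regret by Bayesian regret against an arbitrary prior $\nu$ on $(\theta,(f_t)_{t=1}^n)$. Fixing such a prior, write $\bar f_t = \E[f_t\mid\cH_{t-1}]$ for the posterior-mean loss given the history $\cH_{t-1}=(x_s,f_s(x_s))_{s<t}$; convexity of each $\cF_\theta$ together with the definition of $\cF$ ensures $\bar f_t\in\conv(\cF)$, so the hypothesis on $(\alpha,\beta,m)$ applies at each step to $\bar f_t$.

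Next, I would analyse the conceptual policy that, at round $t$, invokes the hypothesis to obtain measures $\pi_{t,1},\dots,\pi_{t,m}$ (measurable in $\cH_{t-1}$), samples $K_t$ uniformly from $\{1,\dots,m\}$ independently, and plays $x_t\sim\pi_{t,K_t}$. For the realised $f_t$, the hypothesis produces an index $k^\star=k^\star(f_t)$ with
\begin{align*}
\int\bar f_t\d{\pi_{t,k^\star}} - f_{t,\star} \leq \alpha + \sqrt{\beta\int(\bar f_t - f_t)^2\d{\pi_{t,k^\star}}}\,.
\end{align*}
Since $K_t$ hits $k^\star$ with probability $1/m$, squaring, taking expectations and using Cauchy--Schwarz converts this into a per-round information-ratio bound of the form $\E[\bar f_t(x_t)-f_t(x^\star)\mid\cH_{t-1}] \leq \alpha + \sqrt{m\beta\cdot V_t}$, where $x^\star$ is a comparator and $V_t = \E[(\bar f_t(x_t)-f_t(x_t))^2\mid\cH_{t-1}]$ is the residual posterior variance under the algorithm's sampling.

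The third step converts $V_t$ into conditional mutual information. Because losses are in $[0,1]$, a Pinsker/sub-Gaussian argument bounds $V_t$ by a constant times $I_t := I(X^\star;(x_t,y_t)\mid\cH_{t-1})$, where $X^\star$ is a Lipschitz-discretised proxy for the optimal action on a $(1/n)$-net $\mathcal N$ of $\cK$. Summing over $t$, applying Cauchy--Schwarz, and chaining mutual informations yields $\BReg_n\leq 1+n\alpha+\sqrt{nm\beta\cdot H(X^\star)}$. The entropy is controlled by $H(X^\star)\leq \log|\mathcal N|\leq d\log(cn\diam(\cK))$, and the $O(1)$ discretisation error coming from Lipschitzness is absorbed into the leading $1$, producing the claimed bound.

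The main obstacle, and the place where \cref{sec:info-app} is advertised to do careful work, is aligning the factor $m$ on the two sides of the information ratio: it enters the regret through the $1/m$ probability of hitting $k^\star$, and it must enter the mutual-information chain rule in a compatible way (typically by conditioning on $K_t$ so that $I(X^\star;(x_t,y_t)\mid\cH_{t-1},K_t)$ rather than its $K_t$-marginal is what telescopes). A secondary point to verify is that the sub-Gaussian variance-to-information step for bounded losses goes through under the algorithm's $\pi_{t,K_t}$-sampling and targets the discretised optimum $X^\star$ rather than the full environment. The minimax reduction and the net-based entropy bound are essentially mechanical; it is the $m$-bookkeeping and the choice of $X^\star$ that constitute the key technical content distinguishing this statement from the plain bandit-convex version in \cite{Lat20-cvx}.
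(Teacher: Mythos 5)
Your overall template (minimax duality to a Bayesian problem, a per-round information-ratio bound, Pinsker, then an entropy/chain-rule argument over a discretised statistic) is the same as the paper's outline in \cref{sec:info-app}, but two of your concrete steps fail, and the first is exactly the point the paper singles out as the novel content. You apply the hypothesis to the realised $f_t$ and then claim that the full posterior variance $V_t=\E[(\bar f_t(x_t)-f_t(x_t))^2\mid \text{history}]$ is bounded by a constant times $I(X^\star;(x_t,y_t)\mid\text{history})$ with $X^\star$ a discretised optimal action. That step is false: Pinsker controls only the gap between conditional means, i.e. $\E_{t-1}[(\E_{t-1}[y_t\mid x_t]-\E_{t-1}[y_t\mid x_t,Z])^2]\le \E_{t-1}[I_{t-1}(Z;x_t,y_t)]$, so what information about $Z$ bounds is the variance of $\bar f_t-\E_{t-1}[f_t\mid Z]$, not of $\bar f_t-f_t$; the adversary's $g_t$ can fluctuate across the prior in ways uncorrelated with the optimum, keeping $V_t$ bounded away from zero every round while $H(X^\star)$ stays finite. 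Consequently the hypothesis must be invoked with $f=f_{t,z}:=\E_{t-1}[f_t\mid Z=z]$, which requires $f_{t,z}\in\cF$, i.e. that the conditional mean is again a ridge function. This forces $Z$ to contain the ridge direction: the paper takes $Z=(x_\star,\theta)$ with $\theta$ ranging over a finite net of $\BS^{d-1}$, precisely because a mixture of ridge functions with a common $\theta$ is a ridge function while a mixture over different directions is not (this is the content of the boxed remark, and why the result is open for time-varying $\theta$). Your choice of $X^\star$ alone is the one the paper explicitly says does not work, and with the corrected $Z$ the entropy bound must include the sphere net as well as the action net; the truncated-Gaussian noise injection is what justifies both discretisations in the duality step.

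The $m$-bookkeeping, which you flag as the main obstacle, also does not work as you propose. Sampling $K_t$ uniformly gives $\E_{t-1}[\bar f_t(x_t)]=\frac1m\sum_{k}\int_\cK \bar f_t\,\d{\pi_{t,k}}$, and the hypothesis says nothing about $\int_\cK\bar f_t\,\d{\pi_{t,k}}$ for $k\neq k^\star$: it is consistent with the hypothesis that these integrals exceed $f_\star$ by a constant (indeed, in the paper's own construction the level-set measures satisfy $\int\tilde f\,\d{\pi_k}=\tilde f_\star+\epsilon_k$ with $\epsilon_k$ as large as $1$), so uniform mixing can incur $\Theta(1)$ regret per round even when the posterior is degenerate and the variance term vanishes; no Cauchy--Schwarz over the event $\{K_t=k^\star\}$ repairs the display you state. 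The paper instead invokes a combination lemma \citep[Lemma 4]{Lat20-cvx}, which produces a single exploration measure $\pi_t$ that \emph{depends on the posterior} $\mu_t$ over the conditional-mean functions and satisfies $\int_\cK\bar f_t\,\d{\pi_t}-\int_\cF f_\star\,\d{\mu_t}\le 1/n+\alpha+\sqrt{\beta m\int\int(\bar f_t-f)^2\,\d{\mu_t}\,\d{\pi_t}}$; a posterior-independent uniform randomisation over $k$ cannot deliver this inequality from the hypothesis alone. In short, the steps you describe as mechanical (the choice of the conditioning statistic and the variance-to-information conversion) are where the argument genuinely breaks, and the step you identify as the main difficulty needs a posterior-weighted combination rather than uniform sampling of $K_t$.
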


\vspace{-0.5cm}
\begin{center}
\begin{tikzpicture}
\node[fill=black!5!white,draw,text width=120mm,inner sep=2.5mm,align=justify] (a) at (0,0) {
\begin{minipage}{120mm}
\textbf{Important remark}\,\, A previous version of this manuscript claimed the above theorem even when the ridge $\theta$ was time dependent, meaning that
$f_t(x) = g_t(\ip{x, \theta_t})$ for some adversarial sequence $(\theta_t)_{t=1}^n$.
Unfortunately this result does \textit{not} follow
from the standard machinery. The problem is that $\cF$ is not closed under convex combinations. Why this causes a problem
will be apparent when reading \cref{sec:info-app}. New ideas are needed to understand whether or not \cref{thm:info} might continue to hold
for time varying ridge directions.
\end{minipage}
};
\node[anchor=east,inner sep=0pt,xshift=-0.1cm] at (a.north east) {\scalebox{1.7}{$\bm{***}$}};
\end{tikzpicture}
\end{center}

Combining the following proposition with \cref{thm:info} yields \cref{thm:main}.

\begin{proposition}\label{prop:main}
The conditions of \cref{thm:info} hold with 
\begin{align*}
\alpha &= (2 + 16 \sqrt{d})/n &
\beta &= 2^9 d &
m &\leq 2 + \log_2(n^2 \diam(\cK))\,.
\end{align*}
\end{proposition}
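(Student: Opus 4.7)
My plan is a multi-scale focus region construction parametrised by $\bar f$. Define scales $r_k = 2^{k-1}/n^2$ for $k=1,\ldots,m$ with $m = 2 + \ceil{\log_2(n^2 \diam(\cK))}$, so that the smallest scale is $1/n^2$ and the largest exceeds $\diam(\cK) \geq \max_\cK \bar f - \min_\cK \bar f$ (in particular $A_m=\cK$). Set the convex compact focus region
\[
A_k = \{x \in \cK : \bar f(x) \leq \bar f_\star + r_k\}
\]
and let $\pi_k$ be the probability measure on $A_k$ supplied by \cref{lem:kw}, so that $h_{A_k}(\theta) + h_{A_k}(-\theta) \leq 2\sqrt{d\,\Var_{\pi_k}[\ip{X,\theta}]}$ for every $\theta \in \BS^{d-1}$.

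Fix $f \in \cF$ with $f(x) = g(\ip{x,\theta})$ and $f_\star = g(t_\star)$. I split
\[
\int \bar f \, \d{\pi_k} - f_\star
= \underbrace{\Bigl(\int f \, \d{\pi_k} - f_\star\Bigr)}_{(I)_k} \; + \; \underbrace{\int (\bar f - f)\,\d{\pi_k}}_{(II)_k}.
\]
Cauchy--Schwarz gives $|(II)_k| \leq \sqrt{\int (\bar f - f)^2 \,\d{\pi_k}}$, which for $\beta \geq 1$ is absorbed by the $\sqrt{\beta \cdot}$ slot in the target inequality. Everything reduces to picking $k$ so that the ``non-informative'' residual $(I)_k$ is at most $\alpha$, or so that its overflow can be charged to $\beta \int (\bar f - f)^2 \d{\pi_k}$. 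By \cref{fact}, $g$ is $1$-Lipschitz on the projection $\{\ip{x,\theta} : x \in \cK\}$, so since $\pi_k$ is supported in $A_k$ one immediately has $(I)_k \leq w_k(\theta) := h_{A_k}(\theta) + h_{A_k}(-\theta)$.

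I would pigeonhole on $w_k(\theta)$. If $w_1(\theta) \leq 16\sqrt d / n$, use $k=1$: then $(I)_1 \leq 16\sqrt d/n \leq \alpha$ and we are done. Otherwise let $k^\star$ be the largest index with $w_{k^\star}(\theta) \leq 16\sqrt d / n$ (possibly $k^\star = 0$), and work at scale $k^\star + 1$. The doubling $r_{k+1} = 2r_k$ keeps $w_{k^\star+1}(\theta)$ of order $\sqrt d/n$, so $(I)_{k^\star + 1}$ is still of order $\alpha$, while \cref{lem:kw} now forces $\Var_{\pi_{k^\star+1}}[\ip{X,\theta}] \gtrsim 1/n^2$. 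The complementary fact that $\bar f$ is pinned in the thin slab $[\bar f_\star,\bar f_\star + r_{k^\star + 1}]$ on $A_{k^\star+1}$ (so $\Var_{\pi_{k^\star+1}}[\bar f] \leq r_{k^\star+1}^2/4$) should then, via the $L^2$ triangle inequality $\sqrt{\Var[\bar f - f]} \geq \sqrt{\Var[f]} - \sqrt{\Var[\bar f]}$, convert $f$-variance in direction $\theta$ into a lower bound on $\int (\bar f - f)^2 \d{\pi_{k^\star+1}}$ large enough to match the remaining piece of $(I)_{k^\star+1}$ with $\beta = 2^9 d$.

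The main obstacle is exactly this last quantitative step: turning ``$\bar f$ pinned in a thin slab'' and ``$f$ has provably high projected variance along $\theta$'' into a matching $L^2(\pi_k)$ separation, taking care of the degenerate subcase where $t_\star$ lies outside the projection of $A_{k^\star+1}$ onto $\theta$ and $g$ may be nearly flat there. The ridge hypothesis does the essential work here: because $f$ depends on $x$ only through the single direction $\theta$, a single Kiefer--Wolfowitz measure per scale probes all relevant variance, which is precisely what lets $\beta$ scale as $\Theta(d)$ rather than the higher power one finds in unconstrained bandit convex optimisation.
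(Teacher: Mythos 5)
Your overall scaffolding (multi-scale sublevel/level-set regions, Kiefer--Wolfowitz measures from \cref{lem:kw}, pigeonholing over scales) is in the same spirit as the paper's proof, but the proposal has a genuine gap at the place you yourself flag as ``the main obstacle,'' and there is also an error upstream of it.

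The upstream error: the claimed bound $(I)_k := \int f\,\d{\pi_k} - f_\star \leq w_k(\theta)$ does not follow from Lipschitzness. Lipschitzness of $g$ on the projection of $A_k$ controls the \emph{oscillation of $f$ within $A_k$}, i.e.\ $\max_{A_k} f - \min_{A_k} f \leq w_k(\theta)$, but $f_\star = \min_{\cK} f$ can be achieved at an $x_\star$ whose projection $\ip{x_\star,\theta}$ lies far outside the projection of $A_k$, and then $\min_{A_k} f - f_\star$ can be of order $\diam(\cK)$. (Concretely, take $\bar f(x) = \norm{x}$ and $f(x) = 1 - \ip{x,\theta}$ on a large box: $A_k$ is a tiny ball around $\zeros$ while $f$ drops to $0$ only at the far face.) So the decomposition into $(I)_k + (II)_k$ with $(I)_k$ ``immediately'' small is not sound, and the whole plan of ``pick the largest $k$ with $w_k(\theta)$ small'' does not by itself control $\int\bar f\,\d{\pi_k} - f_\star$.

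The deeper issue, which you correctly identify but do not resolve, is that your choice of scale is keyed only to the intrinsic width $w_k(\theta)$ and is not tied to the location of $x_\star$. The paper avoids this by choosing $\epsilon$ as a function of $x_\star$ --- specifically $\epsilon = \min\{\tilde f(x) - \tilde f_\star : x\in\cK,\ \ip{x - x_\star,\theta}=0\}$ --- which forces the crucial identity $h_{\cL_\epsilon}(\theta) = \ip{x_\star,\theta}$ and, after rounding to $\epsilon_k \leq \epsilon$, the two-sided control $\ip{x_\star,\theta} \geq h_k(\theta) \geq \ip{x_\star,\theta}/2$ of \cref{eq:h-bound}. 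This is precisely what lets the analysis split into cases (whether $\zeros$ is inside or far outside the convex hull of the level set in direction $\theta$) and apply \cref{lem:var}/\cref{lem:kw} to get a $\Theta(d)$ information ratio. Your plan replaces this with a heuristic appeal to ``$\Var[f]$ vs.\ $\Var[\bar f]$'' plus an $L^2$ triangle inequality, and the degenerate subcase you flag (nearly-flat $g$ on the probe region with $t_\star$ outside its projection) is exactly the case where $\Var_{\pi_k}[f]$ can be tiny even though $\int \bar f\,\d{\pi_k}-f_\star$ is large, so the argument does not close.

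Two further, smaller deviations from the paper worth noting: (i) the paper works with the perturbed function $\tilde f(x)=\bar f(x)+\norm{x}/(n\diam(\cK))$ and with the \emph{level sets} $\cL_\epsilon=\{\tilde f=\tilde f_\star+\epsilon\}$ rather than sublevel sets of $\bar f$; the linear perturbation is what guarantees the localisation $\cL_\epsilon\subset\{\norm{x}\leq \epsilon n\diam(\cK)\}$ used in \cref{eq:ball} and avoids degeneracies when $\bar f$ has a flat bottom; (ii) the paper includes a Dirac $\pi_0=\delta_{\zeros}$, which handles Cases~1 and~2 (small $f_\star$-gap or small $\ip{x_\star,\theta}$) essentially for free. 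You would need analogues of both ingredients to make the argument rigorous.
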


\section{Proof of Proposition~\ref{prop:main}}\label{sec:main}

The proof relies on the following simple lemma.

\begin{lemma}\label{lem:var}
Suppose that $a \leq b$ and $g : [a, b] \to \R$ is convex and continuous.
Let $X$ be a random variable supported on $[a,b]$. Then, for any $\alpha > g(b)$,
\begin{align*}
\E[(g(X) - \alpha)^2] \geq \frac{(g(b) - \alpha)^2 \Var[X]}{(b - a)^2}\,.
\end{align*}
\end{lemma}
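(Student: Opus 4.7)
The plan is to establish a pointwise lower bound of the form
\begin{align*}
(g(x) - \alpha)^2 \geq \frac{(g(b) - \alpha)^2 (x - c)^2}{(b - c)^2}
\end{align*}
for a carefully chosen reference point $c \in [a, b)$, and then take expectations. Since $\E[(X - c)^2] \geq \Var[X]$ (variance minimises mean squared deviation) and $(b - c)^2 \leq (b - a)^2$, integrating this inequality against the law of $X$ immediately gives the claimed bound.

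I would pick $c = \inf\{x \in [a,b] : g(x) \leq \alpha\}$, which is well defined because $g$ is continuous and $g(b) < \alpha$. Sub-level sets of convex functions are convex, so $\{g \leq \alpha\} = [c, b]$, and either $c = a$ (when $g(a) \leq \alpha$) or $c \in (a, b)$ with $g(c) = \alpha$. On $[c, b]$ the function $\alpha - g$ is concave and nonnegative at both endpoints, so the concave-chord inequality yields
\begin{align*}
\alpha - g(x) \geq \frac{(b - x)(\alpha - g(c)) + (x - c)(\alpha - g(b))}{b - c} \geq \frac{(x - c)(\alpha - g(b))}{b - c}\,,
\end{align*}
and squaring produces the desired pointwise bound on this subinterval.

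The main (minor) obstacle is the left subinterval $[a, c)$, which is nonempty only when $g(a) > \alpha$ and on which $g - \alpha$ has the opposite sign, so concavity of $\alpha - g$ is no longer directly useful. The resolution uses the dual property of convexity: the affine extension of a chord of $g$ is a minorant of $g$ beyond the chord's endpoints. Concretely, the chord through $(c, g(c)) = (c, \alpha)$ and $(b, g(b))$ gives $g(x) \geq \alpha + (g(b) - \alpha)(x - c)/(b - c)$ for $x \in [a, c)$, and since both $g(b) - \alpha$ and $x - c$ are negative this rearranges to $g(x) - \alpha \geq (\alpha - g(b))(c - x)/(b - c) > 0$; squaring reproduces the same bound as on $[c, b]$. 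Combining the two subintervals and integrating finishes the proof.
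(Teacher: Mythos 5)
Your argument is correct. It rests on the same geometric idea as the paper's proof: locate the anchor point $c$ (the paper calls it $x$) where $g$ crosses level $\alpha$ — or set $c = a$ when $g$ stays below $\alpha$ — and replace $g$ by the chord from $(c, g(c))$ to $(b, g(b))$, whose squared distance to the horizontal line at $\alpha$ lower bounds that of $g$ by convexity. Where the two diverge is in the final reduction to a variance. The paper bounds $\E[(\varphi(X)-\alpha)^2]$ from below by minimising over \emph{all} affine functions through $(b, g(b))$, which gives the intermediate expression $\frac{(\alpha-g(b))^2\Var[X]}{\E[(b-X)^2]}$ and is then cleaned up via $\E[(b-X)^2] \leq (b-a)^2$. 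You instead drop the $g(c)$ term from the chord to get the pointwise bound $(g(x)-\alpha)^2 \geq \frac{(\alpha-g(b))^2 (x-c)^2}{(b-c)^2}$ and then use $\E[(X-c)^2]\geq\Var[X]$ together with $(b-c)^2 \leq (b-a)^2$. The two routes are of comparable length; yours avoids the explicit quadratic minimisation and makes fully explicit the case analysis that the paper dismisses as ``geometrically obvious'' (with a figure), which is a small gain in rigour. Both rely on the same inequality $a \leq c \leq b$ to absorb the denominator, so neither is sharper in the end.
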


\begin{proof}
If it exists, let $x \in [a, b]$ be a point such that $g(x) = \alpha$. Otherwise let $x = a$, which by continuity of $g$ means that $g(a) < \alpha$.
Then let $\varphi : [a,b] \to \R$ be the linear function with $\varphi(b) = g(b)$ and $\varphi(x) = g(x)$, which satisfies
\begin{align*}
\E[(g(X) - \alpha)^2] 
&\geq \E[(\varphi(X) - \alpha)^2] \\
&\geq \min_{\eta \in \R} \E[(\eta(X - b) + g(b) - \alpha)^2] \\
&= \frac{(\alpha - g(b))^2 \Var[X]}{\E[(b - X)^2]} \\
&\geq \frac{(\alpha - g(b))^2 \Var[X]}{(b - a)^2}\,,
\end{align*}
where the first inequality is geometrically obvious from the convexity of $g$ (\cref{fig:linear}) and the second follows by optimising over all
linear functions passing through $(b, g(b))$.
The last equality follows by optimising for $\eta$, while the finally inequality holds because $X$ is supported on $[a,b]$.
\end{proof}

\begin{figure}
\centering
\includegraphics[width=5cm]{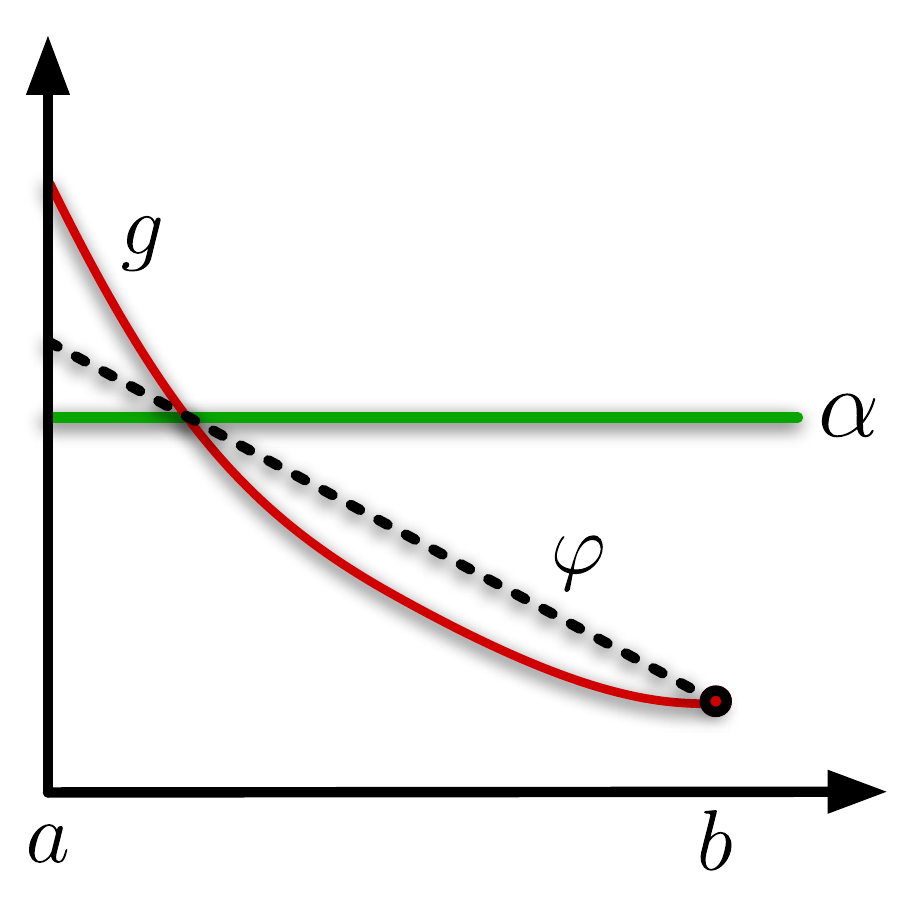}
\caption{The typical situation in the proof of Lemma~\ref{lem:var}. For all $x \in [a,b]$, 
the distance between the linear function $\varphi$ and the horizontal line at $\alpha$
is less than or equal to the distance between $f$ and the same horizontal line.}
\label{fig:linear}
\end{figure}

\begin{proof}[Proof of Proposition~\ref{prop:main}]
There are two steps. First, we define a collection of probability measures $\pi_0,\ldots,\pi_m$ with $m = O(\log(n))$.
In the second step we show this collection satisfies the conditions of \cref{thm:info}.

\paragraph{Step 1: Exploration distributions and perturbations}
Let $\bar f \in \conv(\cF)$ and
choose coordinates on $\cK$ so that $\zeros = \argmin_{x \in \cK} \bar f(x)$.
Let
\begin{align*}
\tilde f(x) = \bar f(x) + \frac{\norm{x}}{n \diam(\cK)} \,,
\end{align*}
which is close to $\bar f$, minimised at $\zeros$ and increases at least linearly with $\norm{x}$.
For $\epsilon > 0$, define
\begin{align*}
\cL_\epsilon = \left\{x \in \cK : \tilde f(x) = \tilde f_\star + \epsilon\right\}\,,
\end{align*}
which is a level set of $\tilde f$. 
Note that $\cL_\epsilon$ is not convex and boundary effects mean that $\zeros \in \conv(\cL_\epsilon)$ is not guaranteed.
Let $\pi_0$ be a Dirac at $\zeros$ and for $k \geq 1$ let
\begin{align*}
\epsilon_k = \frac{2^{k-1}}{\diam(\cK) n^2}
\end{align*}
and $\pi_k$ be the distribution on $\cL_k = \cL_{\epsilon_k}$
given by Lemma~\ref{lem:kw}. Consequentially, if $h_k$ is the support function of $\cL_k$, then 
for all $\theta \in \BS^{d-1}$,
\begin{align}
h_k(\theta) + h_k(-\theta) 
\leq 2 \sqrt{d \, \Var_{x \sim \pi_k}[\ip{x, \theta}]}\,.
\label{eq:pi}
\end{align}
Let 
\begin{align}
m = \max\{k : \epsilon_k \leq 1\} \leq 1 + \log_2(\diam(\cK)n^2) \,.
\label{eq:m}
\end{align}

\paragraph{Step 2}
Let $f \in \cF$ so that $f(x) = g(\ip{x, \theta})$ for some convex $g$ and $\theta \in \BS^{d-1}$.
Let $x_\star \in \cK$ be a minimiser of $f$ and assume without loss of generality that the sign of $\theta$ has been chosen so that
$\ip{x_\star, \theta} \geq 0$.
We will prove that there exists a $\pi \in \{\pi_0, \ldots, \pi_m\}$ such that 
\begin{align}
\int_{\cK} \tilde f \d{\pi} - f_\star \leq \frac{2}{n} + 16 \sqrt{d\int_{\cK} (\tilde f - f)^2 \d{\pi}}\,.
\label{eq:exp}
\end{align}
Let $\epsilon = \min\{\tilde f(x) - \tilde f_\star : x \in \cK, \ip{x - x_\star, \theta} = 0\}$,
which exists because $\tilde f$ is continuous and $\cK \cap \{x : \ip{x - x_\star, \theta} = 0\}$ is compact.
We claim that $h_{\cL_\epsilon}(\theta) = \ip{x_\star, \theta}$.
That $h_{\cL_\epsilon}(\theta) \geq \ip{x_\star, \theta}$ is immediate.
Suppose there exists an $x \in \cL_\epsilon$ with $\ip{x, \theta} > \ip{x_\star,\theta}$.
Then by convexity of $\cK$ and $\tilde f$, there exists a $y \in \cK$ with $\ip{y - x_\star, \theta} = 0$ and $\tilde f(y) < \tilde f_\star + \epsilon$, which
contradicts the definition of $\epsilon$.
Therefore $h_{\cL_\epsilon}(\theta) \leq \ip{x_\star, \theta}$ and hence $h_{\cL_\epsilon}(\theta) = \ip{x_\star, \theta}$.
Next, by the definition of $\tilde f$,
\begin{align*}
\cL_\epsilon \subset \{x : \tilde f(x) \leq \tilde f_\star + \epsilon\}
\subset \{x : \norm{x} \leq \epsilon n \diam(\cK)\}\,.
\end{align*}
Hence,
\begin{align}
\ip{x_\star, \theta} = h_{\cL_\epsilon}(\theta) \leq \epsilon n \diam(\cK)\,. 
\label{eq:ball}
\end{align}
Suppose for a moment that $\ip{x_\star, \theta} \geq 1/n$. It follows from \cref{eq:ball} that $\epsilon \geq 1/(\diam(\cK)n^2)$, in which
case there exists a largest $k$ such that $\epsilon_k \leq \epsilon$.
Let $x \in \cL_\epsilon$ be such that $\ip{x, \theta} = \ip{x_\star, \theta}$, which satisfies
$\tilde f(x) = \epsilon + \tilde f_\star$. By continuity, there exists an $\alpha \in [0,1]$ 
such that $\tilde f(\alpha x) = \tilde f_\star + \epsilon_k$. 
By convexity of $\tilde f$ and the fact that $\tilde f$ is minimised at $\zeros$, 
\begin{align*}
\epsilon_k + \tilde f_\star 
= \tilde f(\alpha x) 
\leq \alpha \tilde f(x) + (1 - \alpha) \tilde f_\star 
= \alpha \epsilon + \tilde f_\star\,.
\end{align*}
Rearranging shows that $\alpha \geq \epsilon_k / \epsilon \geq 1/2$ and hence 
\begin{align}
\ip{x_\star, \theta} \geq h_k(\theta) \geq \ip{\alpha x, \theta} \geq \frac{\ip{x_\star, \theta}}{2}
\label{eq:h-bound}
\end{align}
We consider four cases, the last two of which are illustrated in \cref{fig:34}:
\begin{enumerate}
\item $f_\star \abovelabel{1a}\geq \tilde f_\star - 2/n$.
\item $f_\star \abovelabel{2a}\leq \tilde f_\star - 2/n$ and $\ip{x_\star, \theta} \abovelabel{2b}\leq 1/n$.
\item $f_\star \abovelabel{3a}\leq \tilde f_\star - 2/n$ and $\ip{x_\star, \theta} \abovelabel{3b}\geq 1/n$ and $-h_k(-\theta) \abovelabel{3c}\geq h_k(\theta) / 2$.
\item $f_\star \abovelabel{4a}\leq \tilde f_\star - 2/n$ and $\ip{x_\star, \theta} \abovelabel{4b}\geq 1/n$ and $-h_k(-\theta) \abovelabel{4c}\leq h_k(\theta) / 2$.
\end{enumerate}
\paragraph{Case 1}
By (1a), \cref{eq:exp} holds with $\pi = \pi_0$ trivially.

\paragraph{Case 2}
By Fact~\ref{fact}, $g$ is Lipschitz on the closed interval $\{\ip{x, \theta} : x \in \cK\}$.
Combining this with (2a) and (2b) yields
\begin{align*}
\sqrt{\int_{\cK} (\tilde f - f)^2 \d{\pi}_0}
&=\tilde f_\star - f(\zeros) \\ 
\tag{$g$ is Lipschitz} &\geq \tilde f_\star - f_\star - \ip{x_\star, \theta} \\
&= \frac{1}{2} (\tilde f_\star - f_\star) + \frac{1}{2} (\tilde f_\star - f_\star) - \ip{x_\star, \theta} \\
\tag{using 2a and 2b} &\geq \frac{1}{2} (\tilde f_\star - f_\star) \\ 
&= \frac{1}{2}\left(\int_\cK \tilde f \d{\pi_0} - f_\star\right)\,.
\end{align*}
And again, \cref{eq:exp} holds with $\pi = \pi_0$.

\begin{figure}
\includegraphics[width=6cm]{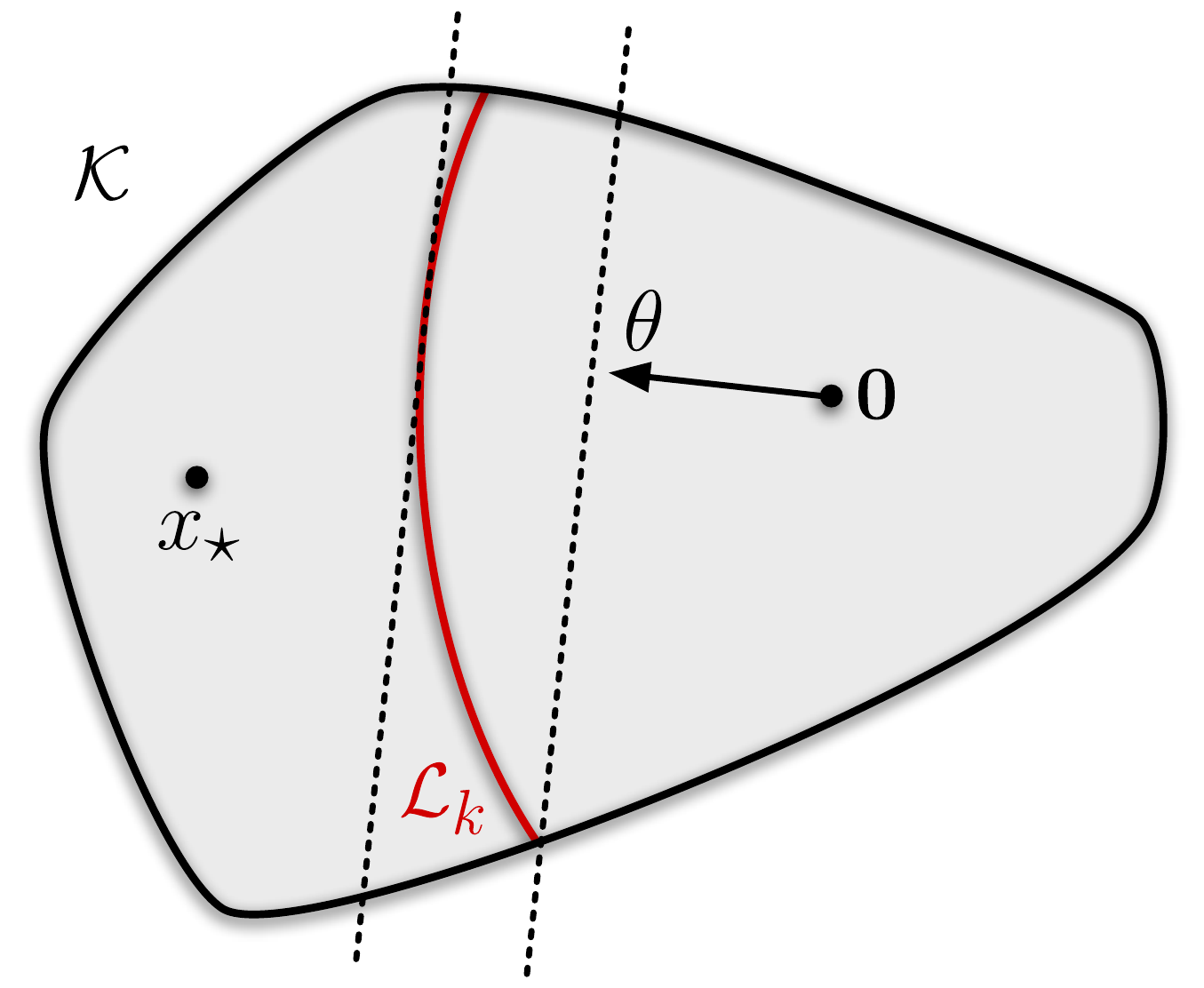}
\hspace{1cm}
\includegraphics[width=6cm]{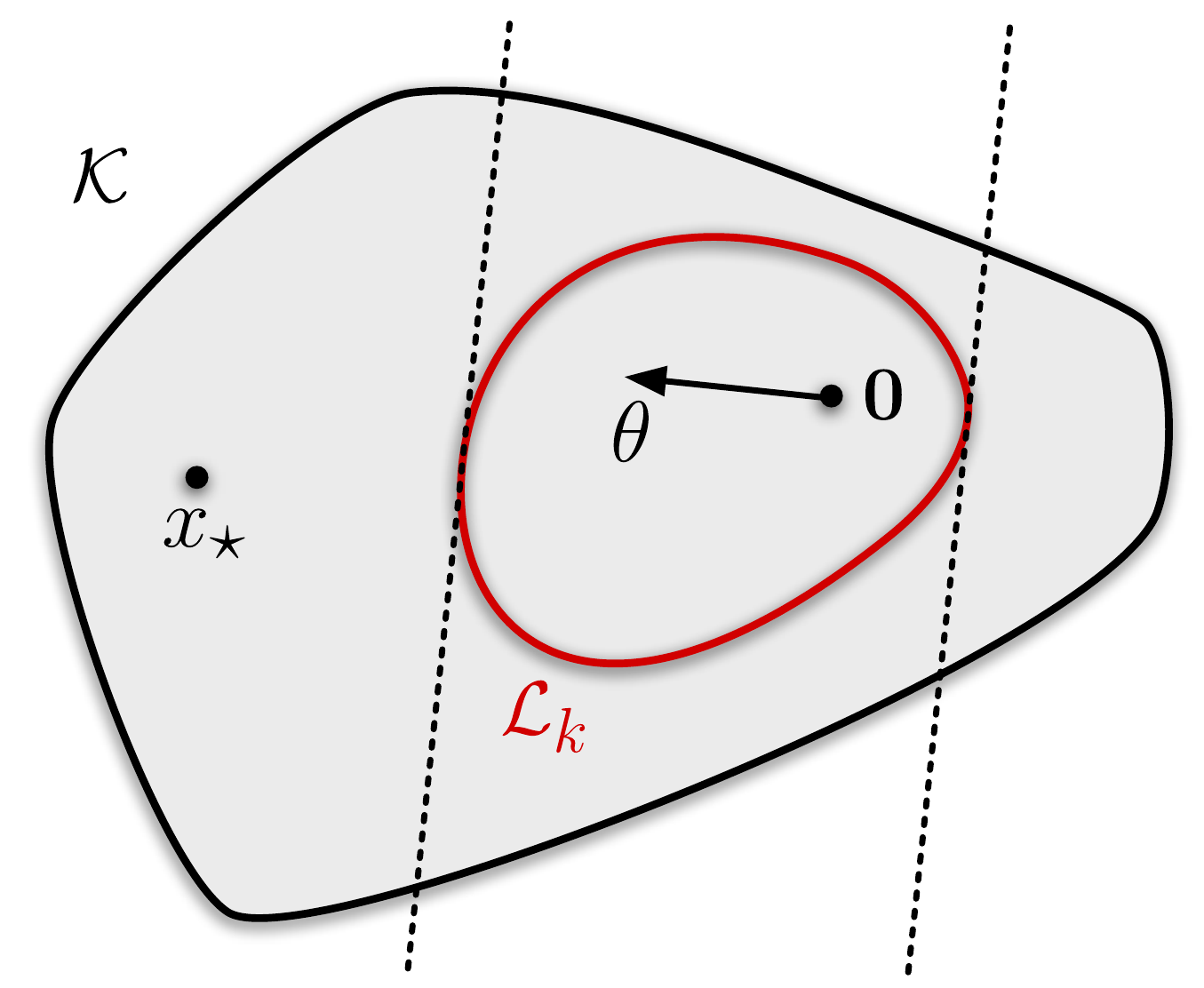}
\caption{
Examples of Case 3 (left) and Case 4 (right).
Case 3 only occurs when $-h_k(-\theta) \geq h_k(\theta) / 2$, which means that $\zeros$ is far outside
the convex hull of the level set $\cL_k$.
Meanwhile, in Case 4 the width of $\cL_k$ in the direction $\theta$ is at least the same order of magnitude as $h_k(\theta)$.
The distance between the dotted lines in both illustrations is the width of $\cL_k$ in direction $\theta$, which is $h_k(\theta) + h_k(-\theta)$.
Note that because $f$ is a ridge function, all points $x$ in $\cK$ with $\ip{x - x_\star, \theta} = 0$ minimise $f$.
}\label{fig:34}
\end{figure}

\paragraph{Case 3}
Let $\Delta = \tilde f_\star - f_\star$.
Suppose that $f(\zeros) \geq \tilde f_\star + \Delta / 4$. Then 
\begin{align*}
\int_{\cK} \tilde f \d{\pi_0} - f_\star
&= \Delta 
\leq 4 \sqrt{\int_{\cK} (\tilde f - f)^2 \d{\pi_0}}
\end{align*}
and \cref{eq:exp} holds with $\pi = \pi_0$.
Suppose for the remainder of this case that $f(\zeros) \leq \tilde f_\star + \Delta / 4$.  For all $x \in \cL_k$, 
\begin{align}
\ip{x_\star, \theta} 
\abovelabel{(by \cref{eq:h-bound})}\geq h_k(\theta) 
\geq \ip{x, \theta} 
\geq -h_k(-\theta) 
\abovelabel{(by 3c)}\geq h_k(\theta) / 2 
\abovelabel{(by \cref{eq:h-bound})}\geq \frac{\ip{x_\star, \theta}}{4} \,.
\label{eq:l-bound}
\end{align}
Therefore, $\ip{x, \theta} / \ip{x, \theta_\star} \in [0,1]$ and
\begin{align*}
f(x) 
&= f\left(\frac{\ip{x, \theta}}{\ip{x, \theta_\star}} x_\star + \left(1 - \frac{x, \theta}{\ip{x, \theta_\star}}\right) \zeros\right) \\
\tag{convexity of $g$} &\leq \frac{\ip{x,\theta}}{\ip{x_\star, \theta}} f_\star + \left(1 - \frac{\ip{x, \theta}}{\ip{x_\star,\theta}}\right) 
(\tilde f_\star + \Delta / 4) \\
&= \tilde f_\star + \frac{\Delta}{4}\left(1 - \frac{5\ip{x, \theta}}{\ip{x_\star, \theta}}\right) \\
\tag{by \cref{eq:l-bound}} &\leq \tilde f_\star - \frac{\Delta}{16}\,.
\end{align*}
Therefore, since $\pi_k$ is supported on $\cL_k$, 
\begin{align*}
\int_{\cK} (\tilde f - f)^2 \d{\pi_k}
= \int_{\cK} (\tilde f_\star + \epsilon_k - f)^2 \d{\pi_k}
\geq \frac{1}{16^2} (\Delta + \epsilon_k)^2
= \frac{1}{16^2} \left(\int_\cK \tilde f \d{\pi_k} - f_\star\right)^2\,. 
\end{align*}
Hence, \cref{eq:exp} holds with $\pi = \pi_k$.

\paragraph{Case 4}
By definition of the support function $\ip{x, \theta} \in [-h_k(-\theta), h_k(\theta)]$ for all $x \in \cL_k$.
Furthermore, by \cref{eq:h-bound}, $\ip{x_\star, \theta} \in [h_k(\theta), 2 h_k(\theta)]$.
Hence, by Lemma~\ref{lem:var} with $a = -h_k(-\theta)$, $b = \ip{x_\star, \theta} \leq 2 h_k(\theta)$ and $\alpha = \tilde f_\star + \epsilon_k$,
\begin{align*}
\int_\cK (\tilde f - f)^2 \d{\pi_k}
\tag{$\pi_k$ supported on $\cL_k$} &= \int_\cK (\tilde f_\star + \epsilon_k - g(\ip{x, \theta}))^2 \d{\pi_k}(x) \\
\tag{By Lemma~\ref{lem:var}} &\geq \frac{(\tilde f_\star + \epsilon_k - f_\star)^2 \Var_{x \sim \pi_k}[\ip{x, \theta}]}{(2 h_k(\theta) + h_k(-\theta))^2} \\
\tag{By \cref{eq:pi}} &\geq \frac{(\tilde f_\star + \epsilon_k - f_\star)^2 (h_k(\theta) + h_k(-\theta))^2}{4d (2 h_k(\theta) + h_k(-\theta))^2} \\
&\tag{By 4c}\geq \frac{(\tilde f_\star + \epsilon_k - f_\star)^2}{64 d} \\
&\tag{$\pi_k$ supported on $\cL_k$}= \frac{1}{64d} \left(\int_\cK \tilde f \d{\pi_k} - f_\star\right)^2 \,.
\end{align*}
where the final inequality is trivial if $h_k(-\theta)$ is positive, while if $h_k(-\theta)$ is negative, then using (4c),
$h_k(\theta) + h_k(-\theta) \geq h_k(\theta)/2 \geq (2 h_k(\theta) + h_k(-\theta)) / 4$ and the result follows.

\paragraph{Summary}
We have shown that for all $f \in \cF$ there exists a policy $\pi \in \{\pi_0, \ldots, \pi_m\}$ such that \cref{eq:exp}
holds. By the definition of $\tilde f$ it follows that
\begin{align*}
\int_\cK \bar f \d{\pi} - f_\star 
\tag{since $\bar f \leq \tilde f$}&\leq \int_\cK \tilde f \d{\pi} - f_\star \\ 
\tag{by \cref{eq:exp}} &\leq \frac{2}{n} + 16 \sqrt{d \int_\cK (\tilde f - f)^2 \d{\pi}} \\
&\leq \frac{2 + 16 \sqrt{2d}}{n} + 16 \sqrt{2d \int_\cK (\bar f - f)^2 \d{\pi}} \,,
\end{align*}
where we used the inequalities $(a + b)^2 \leq 2a^2 + 2b^2$ and $\sqrt{a + b} \leq \sqrt{a} + \sqrt{b}$ for reals $a, b \geq 0$.
The result now follows from \cref{eq:m} and \cref{thm:info} and noting that in this proof the number of exploration policies is $m+1$.
\end{proof}

\section{Proof of Fact~\ref{fact}}\label{sec:fact}
\newcommand{\Z}{\mathbb Z}
Convexity is obvious.
We need to prove that $g$ is Lipschitz on $\{\ip{x, \theta} : x \in \cK\}$.
Elementary differentiation does not suffice: for example, if $d = 2$ and $f(x) = \sqrt{2} x_1$ and $\cK = \{(x, x) : x \in [0,1]\} \subset \R^2$,
then $f$ is Lipschitz on $\cK$, but $g(x) = \sqrt{2} x$ is not Lipschitz. The problem is that $\cK$ is not a convex body because it does
not have a non-empty interior. 
To exploit the fact that $\cK$ is a convex body, note that for any $x, y$ in the interior of $\cK$ with $\ip{y - x, \theta} \geq 0$, 
there exists a path
from $x$ to $y$ that moves either perpendicular to $\theta$ or in the direction of $\theta$.
Since $f$ is constant on hyperplanes $\{z : \ip{z, \theta} = c\}$ for any $c$, the Lipschitzness of $f$ applied along the path
implies that $g$ is also Lipschitz.
The extension to $x, y$ on the boundary of $\cK$ follows from the continuity of $f$ and by passing to the limit.

\newcommand{\cC}{\mathcal C}
\newcommand{\sP}{\mathcal P}
\newcommand{\cN}{\mathcal N}
\newcommand{\cG}{\mathcal G}
\newcommand{\cKa}{\mathcal K_{\textrm{a}}}
\newcommand{\BSa}{\BS^{d-1}_{\textrm{a}}}
\newcommand{\cHa}{\cH_{\textrm{a}}}
\newcommand{\poly}{\operatorname{poly}}

\section{Proof outline of \cref{thm:info}}\label{sec:info-app}
The majority of the argument follows that given by \cite[Appendix A]{Lat20-cvx}.
Let $\cKa \subset \cK$ and $\BSa \subset \BS^{d-1}$ be finite covering sets such that 
\begin{align*}
\sup_{x \in \cK} \inf_{y \in \cKa} \norm{x - y} \leq \epsilon 
\qquad \text{and} \qquad \sup_{x \in \BS^{d-1}} \inf_{y \in \BSa} \norm{x - y} \leq \epsilon\,,
\end{align*}
with $\epsilon = \poly(1/n)$ suitably small. 
Let $\cH = \cup_{\theta \in \BS^{d-1}} \cF_\theta^n$, which is the set of
possible sequences of functions that the adversary can choose. Let $\cHa = \cup_{\theta \in \BSa} \cF_\theta^n$.
We assume that rather than observe $f_t(x_t)$ after round $t$, the learner observes $y_t = f_t(x_t) + \eta_t$ where $\eta_t$ is a Gaussian with zero mean
and variance $\sigma^2 = \poly(1/n)$ and truncated in $[-1,1]$. This can only increase the regret, since the effect can be modelled by restricting
the class of policies to add the noise before making a decision.
The value of $\sigma^2$ should be chosen such that the probability that a truncation occurs is $O(1/n^2)$ while $\epsilon$ should be $O(\poly(\sigma^2/n))$.
Standard results show that $\log |\cKa|$ and $\log |\BSa|$ are both at most $\const d \log(n\diam(\cK))$.

\paragraph{Policies and the Bayesian probability space}
Let $\sP(A)$ be the space of finitely supported probability measures on set $A$ with the discrete $\sigma$-algebra.
Given a measure $\nu \in \sP(\cH)$ and a policy $\pi$, let $\bbP_\nu^\pi$ be the probability measure on $(f_t)_{t=1}^n$ and $(x_t)_{t=1}^n$ and $(y_t)_{t=1}^n$
when $(f_t)_{t=1}^n$ are sampled from $\nu$ and the laws of the outcome are determined by the interaction between $\pi$ and the bandit.
Expectations with respect to $\bbP_\nu^\pi$ are denoted by $\E_\nu^\pi$.
The optimal action in $\cKa$ in hindsight is $x_\star = \argmin_{x \in \cKa} \sum_{t=1}^n f_t(x)$.

\paragraph{Minimax duality}
Let $\Pi_\textrm{a}$ be the set of policies that plays actions in $\cKa$ almost surely.
Let $\Reg_n(\pi, (f_t)_{t=1}^n)$ be the regret for a given policy and sequence of functions.
By minimax duality,
\begin{align}
\Reg_n
&\leq \inf_{\pi \in \Pi_{\textrm{a}}} \sup_{(f_t)_{t=1}^n \in \cH} \Reg_n(\pi, (f_t)_{t=1}^n) \nonumber \\
&\leq 1 + \sup_{\nu \in \sP(\cH)} \inf_{\pi \in \Pi_\textrm{a}} \E_\nu^\pi\left[\sum_{t=1}^n f_t(x_t) - f_t(x_\star)\right] \nonumber \\
&\leq 2 + \sup_{\nu \in \sP(\cHa)} \inf_{\pi \in \Pi_\textrm{a}} \E_\nu^\pi\left[\sum_{t=1}^n f_t(x_t) - f_t(x_\star)\right]\,, \label{eq:bayes}
\end{align}
where the first inequality is trivial. The second follows from minimax duality \citep{LS19pminfo}. 
The last inequality follows by choosing $\epsilon = \poly(1/n)$ suitably small and using standard information-theoretic arguments that for all $(f_t)_{t=1}^n \in \cH$
there exists an approximation in $\cHa$ that because of the noisy losses is statistically indistinguishable from the perspective of the policy.
This is where the conditions on $\epsilon$ and $\sigma^2$ are needed, but we omit details.

\paragraph{Bayesian regret}
For the remainder we bound \cref{eq:bayes} for any $\nu \in \sP(\cHa)$ and a carefully constructed policy $\pi$. Abbreviate $\E = \E_\nu^\pi$. 
Let $\theta \in \BSa$ be the random element such that $f_t(x) = g_t(\ip{x, \theta})$ for suitably chosen $(g_t)_{t=1}^n$ and let $Z = (x_\star, \theta)$.
The next step is to bound the Bayesian regret on the right-hand side of \cref{eq:bayes}. 
Let $\bbP_t(\cdot) = \bbP(\cdot | x_1, y_1,\ldots,x_t,y_t)$ and $\E_t$ be the expectation with respect to $\bbP_t$.
For $z \in \cKa \times \BSa$, let
\begin{align*}
f_{t,z}(x) = \E_{t-1}[f_t(x) | Z = z] \qquad \text{and} \qquad \bar f_t(x) = \E_{t-1}[f_t(x)] \,.
\end{align*}
Note that $f_{t,z}$ is a ridge function.
Let $\mu_t$ be the finitely supported measure on $\cF$ with $\mu_t(\{f_{t,z}\}) = \bbP_{t-1}(Z = z)$.
In previous arguments $Z$ was defined to be the optimal action, but in our setup the resulting function would no longer be a ridge function and the analysis
in \cref{sec:main} would not apply.
By the assumptions of the theorem and a lemma for combining exploratory distributions \citep[Lemma 4]{Lat20-cvx}, there exists a distribution $\pi_t$ on $\cKa$ such that
\begin{align*}
\int_{\cKa} \bar f_t(x) \d{\pi}_t(x) - \int_{\cF} f_\star \d{\mu_t}(f) \leq 1/n + \alpha + \sqrt{\beta m \int_{\cKa} \int_\cF (\bar f_t - f)^2 \d{\mu_t}(f) \d{\pi_t}(x)}\,.
\end{align*}
The additional constant $1/n$ appears because $\pi_t$ must modified to be supported on $\cKa$.
We let $\pi = (\pi_t)_{t=1}^n$.
Then,
\begin{align*}
\E\left[\sum_{t=1}^n f_t(x_t) - f_t(x_\star)\right]
&\leq \E\left[\sum_{t=1}^n \left(\int_{\cKa} \bar f_t(x) \d{\pi_t}(x) - \int_{\cF} f_\star \d{\mu_t}(f)\right)\right] \\
&\leq 1 + n\alpha + \E\left[\sum_{t=1}^n \sqrt{\beta m \int_{\cKa} \int_\cF (\bar f - f)^2 \d{\mu_t} \d{\pi_t} }\right]\,.
\end{align*}
Next, we relate the right-hand side above to the information gain about $Z$. 
Letting $I_t(U ; V)$ be the mutual information between random elements $U$ and $V$ under $\bbP_t$,
by Pinsker's inequality,
\begin{align*}
\int_\cF \int_\cK (\bar f - f)^2 \d{\pi_t} \d{\mu_t}
&= \E_{t-1}\left[(\E_{t-1}[y_t|x_t] - \E_{t-1}[y_t | Z,x_t])^2\right] \\
&\leq \E_{t-1}\left[I_{t-1}(Z ; x_t, y_t)\right]\,. 
\end{align*}
By the chain rule for the mutual information and letting $H(Z)$ be the entropy of random element $Z$,
\begin{align*}
\E\left[\sum_{t=1}^n I_{t-1}(Z ; x_t, y_t)\right] \leq H(Z) \leq \log |\cKa| + \log |\BSa| \leq \const d \log(n \diam(\cK))\,.
\end{align*}
Therefore, by Cauchy--Schwarz, the Bayesian regret is bounded by
\begin{align*}
\E\left[\sum_{t=1}^n f_t(x_t) - f_t(x_\star)\right]
&\leq 1 + n\alpha + \E\left[\sum_{t=1}^n \sqrt{\beta m I_{t-1}(Z ; x_t, y_t)}\right] \\
&\leq 1 + n\alpha + \sqrt{\beta m n \E\left[\sum_{t=1}^n I_{t-1}(Z ; x_t, y_t)\right]} \\
&\leq 1 + n\alpha + \const \sqrt{\beta m n d \log(n\diam(\cK))}\,.
\end{align*}

\bibliographystyle{plainnat}
\bibliography{all}

\end{document}